%% file: main.tex
\documentclass{article}
\usepackage{iclr2027_conference,times}

\input{math_commands.tex}

\usepackage{amsmath}
\usepackage{amsfonts}
\usepackage{amssymb}
\usepackage{mathtools}
\usepackage{amsthm}
\usepackage{bm}
\usepackage{graphicx}
\usepackage{wrapfig}
\usepackage{float}
\usepackage{subcaption}
\usepackage{booktabs}
\usepackage{multirow}
\usepackage{enumitem}
\usepackage{xcolor}
\usepackage{colortbl}
\usepackage{url}
\usepackage{hyperref}

\newtheorem{definition}{Definition}
\newtheorem{proposition}{Proposition}

\newcommand{\cW}{\mathcal{W}}
\newcommand{\cA}{\mathcal{A}}
\newcommand{\cD}{\mathcal{D}}

\newcommand{\cS}{\mathcal{S}}

\newcommand{\skipact}{\mathrm{skip}}
\newcolumntype{L}[1]{>{\raggedright\arraybackslash}m{#1}}

\title{Learning What to Skip: Counterfactual Credit Assignment for Efficient Multi-Agent LLM Workflows}

\author{Jinfeng Xu$^{1}$, Zheyu Chen$^2$, Ziyue Peng$^3$, Zheng Lin$^4$, Shuo Yang$^5$, Jinze Li$^5$, \textbf{Zheng Xing}$^6$, \\ \textbf{Mengran Li}$^7$, \textbf{Victor C. M. Leung}$^1$\thanks{Corresponding Author.}\\ 
 \\
$^1$The University of British Columbia, Canada;
$^2$The Hong Kong Polytechnic University, Hong Kong; \\
$^3$The Hong Kong University of Science and Technology, Hong Kong; \\
$^4$University of Luxemburg, Luxemburg; 
$^5$The University of Hong Kong, Hong Kong; \\
$^6$Shenzhen University, China; 
$^7$Sun Yat-Sen University, China; 
}
\iclrfinalcopy
\begin{document}

\maketitle

\begin{abstract}
Multi-agent LLM workflows use planning, execution, verification, and summarization to improve task performance, yet the value of each component depends on the state already produced. Executing every component can waste computation or overwrite a correct intermediate answer. We formulate component omission as counterfactual credit assignment: full-workflow logs reveal the executed trajectory's reward, while controlled skip interventions reveal the consequences of omitting a future step. We introduce \textbf{Learning What to Skip} (LW2S), which learns action-specific safety models from these interventions and combines held-out calibration with domain-native guards to select skips. When an early skip is rejected, the controller can continue execution and reconsider a later component. Across mathematical reasoning, multiple-choice QA, and code generation with two instruction-model families, LW2S reduces recorded token cost while matching or improving aggregate full-workflow accuracy in the evaluated settings. Scale-up and second-topology experiments further examine component redundancy, while shared-error cases reveal why agreement alone is insufficient for skip selection. These findings connect efficient workflow execution to learning the conditional utility of individual components.
\end{abstract}

\input{Tex/Introduction}
\input{Tex/RelatedWork}
\input{Tex/Methodology}
\input{Tex/Analysis}
\input{Tex/Experiment}
\input{Tex/Conclusion}

\subsection*{AI Use Statement}

We used generative AI tools to assist with paper writing and proofreading. All claims, numerical results, scripts, and tables reported in this work were derived from the authors’ local source files and experimental outputs. We take full responsibility for the final content of this work, including the text, statements, and code.



\subsection*{Reproducibility Statement}

The experiments use open-weight instruction backbones, fixed random seeds, public benchmarks, and saved JSONL traces containing full and single-component counterfactual interventions. The repository records the trace schema, generation commands, calibration scripts, policy summaries, and resource footprints. We have released anonymized source code, trace-processing scripts, task identifiers, and generated summary tables as supplementary material.



\bibliography{iclr2027_conference}
\bibliographystyle{iclr2027_conference}

\appendix
\input{Tex/Appendix}

\end{document}

%% file: math_commands.tex
\usepackage{amsmath,amsfonts,bm}

\def\eqref#1{equation~\ref{#1}}

\def\1{\bm{1}}

\DeclareMathAlphabet{\mathsfit}{\encodingdefault}{\sfdefault}{m}{sl}
\SetMathAlphabet{\mathsfit}{bold}{\encodingdefault}{\sfdefault}{bx}{n}

\DeclareMathOperator*{\argmax}{arg\,max}

%% file: Tex/Introduction.tex
\section{Introduction}
\label{sec:introduction}

Multi-agent LLM systems increasingly solve tasks by executing structured workflows rather than a single model call. A typical workflow may plan a solution, query multiple executors, verify intermediate answers, revise the result, and summarize a final response. This design has an intuitive appeal: more specialized computation can expose errors that one model call would miss. It is also costly. Each additional component consumes tokens and latency, and each extra reasoning step creates another opportunity for a plausible but wrong intermediate output to overwrite a correct one \citep{du2023multiagent,wang2024mixture,gptswarm2024,goa2026,selforg2026,card2026}.

Recent agentic routing and graph-of-agents methods address part of this problem by deciding which agents to select or how agents should communicate \citep{wang2024mixture,du2023multiagent,gptswarm2024,graphplanner2026,goa2026,selforg2026}. These methods shift multi-agent inference away from fully dense collaboration. However, they usually optimize workflow construction from end-to-end outcomes: which model should be called, which role should be active, or which graph should be generated for a query. They do not directly answer a finer counterfactual question: \textbf{after a workflow has already produced a partial state, which future components are still worth executing?}

Our starting observation is that component utility is not a role property. A verifier can repair an incorrect solution, leave a correct solution unchanged, or damage an answer by introducing a shared misconception. A summarizer can be useful when intermediate outputs are verbose or inconsistent, but redundant when the verifier has already produced executable code or a final answer. A planner may help on difficult tasks and be pure overhead on easy ones. The same component can therefore have positive, zero, or negative marginal utility across tasks and domains. This makes fixed pruning unsafe and full execution wasteful.

We treat each optional workflow component as an intervention. Given a task \(x\), an observed workflow prefix \(h_s\), and a candidate future component \(a\), we ask whether skipping \(a\) would preserve the reward of the full workflow while reducing cost. This is a counterfactual credit assignment problem for workflow computation. Proposition~\ref{prop:nonidentifiability} formalizes the basic obstacle: full-workflow logs reveal the reward of the executed trajectory, but they do not identify the reward that would have been obtained by omitting a future component. The trace data must therefore contain controlled interventions such as omitting the verifier, omitting the summarizer, or bypassing an executor while keeping the rest of the workflow fixed.

We introduce \textbf{Learning What to Skip}, a selective decision framework for multi-agent LLM workflows. LW2S collects counterfactual traces by executing the full workflow and single-component skip interventions for the same task. It then trains action-specific safety models that estimate whether a skip action preserves full-workflow reward under the observed prefix. At deployment, LW2S accepts a skip only when held-out calibration and a domain-native guard support the action; otherwise it abstains to the full workflow or tries a later skip decision.

This formulation makes workflow computation conditional on component utility in the current state. Numerical agreement can support skipping in math reasoning, whereas option agreement in multiple-choice QA can reflect a shared misconception. In code generation, executable tests distinguish useful agreement from shared buggy logic. These differences connect the choice of skip action to domain-specific evidence, as organized in Appendix~\ref{app:failure-taxonomy}.

We evaluate LW2S across mathematical reasoning, multiple-choice QA, and code generation with two instruction-model families, larger-model scale-up diagnostics, and a second debate-refine topology. Across the main evaluated settings, LW2S reduces component cost while matching or improving aggregate full-workflow accuracy. The MBPP slice-shift analysis shows how selecting between verifier and summarizer omissions recovers savings when early skipping fails. Appendix~\ref{app:additional-details} provides the evaluation protocols, uncertainty estimates, and additional results.

Our contributions are:
\begin{itemize}[leftmargin=*]
    \item We formulate workflow efficiency as counterfactual credit assignment: full-workflow logs reveal executed rewards, but not the safety of skipping later components.
    \item We introduce LW2S, a calibrated selective controller with action-specific safety prediction, source-stable calibration, abstention, fall-through, and domain-native guards.
    \item We compare LW2S with RouteLLM, GraphPlanner, AutoMix, Prompt-LLM, and ablations under the same post-prefix skip interface across math, QA, and coding tasks.
    \item We provide model-generalization, scale-up, resource-accounting, second-topology, and failure-taxonomy evidence that separates safe redundancy from unsafe shared-consensus regimes.
\end{itemize}

%% file: Tex/RelatedWork.tex
\section{Related Work}
\label{sec:relatedwork}

\paragraph{Structured multi-agent workflows and routing.}
Multi-agent LLM systems distribute reasoning across roles, model instances, and communication structures. Debate methods ask agents to critique and revise one another. Mixture-of-agents systems aggregate multiple proposals, and graph-based systems use selected agents, directed message passing, or learned workflow graphs to avoid dense all-to-all interaction \citep{du2023multiagent,wang2024mixture,gptswarm2024,goa2026,selforg2026,card2026}. Routing methods make this efficiency objective more explicit: RouteLLM learns model-pair routing from preference data, while recent agentic routers extend routing to multi-step workflow generation and graph-structured memory \citep{routellm2024,graphplanner2026,masrouter2025}. These methods decide which model, agent, or graph should be used before or during workflow construction. LW2S takes the workflow prefix as given and asks a later question: which future component is still worth running?

\paragraph{Selective computation and verification.}
Early-exit and selective-prediction methods abstain when prediction confidence is insufficient \citep{geifman2017selective,geifman2019selectivenet,xin2020deebert}. In LLM systems, related ideas appear as confidence-based routing, self-consistency stopping, consensus heuristics, and self/meta-verification cascades such as AutoMix \citep{automix2023}. Single-agent workflow compaction offers another route by simulating homogeneous multi-agent workflows inside one model and reusing context \citep{xu2026oneflow}. These approaches exploit observable signals, but confidence, agreement, and compact execution can still conflate redundancy with correctness. Agreement is not correctness: executors can share an incorrect option on MMLU or produce structurally similar buggy code on MBPP. We therefore pair selective computation with calibrated counterfactual evidence and domain-native guards. In code domains, visible unit tests provide a useful checker signal, although verifier-scoring benchmarks show that small test suites can have limited discriminative power \citep{ficek2025scoring}.

\paragraph{Counterfactual credit for workflow components.}
Credit assignment usually attributes final reward to actions in a trajectory. Recent counterfactual credit methods for LLM collaboration restore fixed transcript prefixes to evaluate alternative agent decisions for policy optimization \citep{chen2026c3}. Agentic workflows expose a more operational inference-time question: a future component is valuable only when executing it changes reward enough to justify its cost. Full trajectories cannot answer that question, because they never reveal what would have happened had a verifier, summarizer, or later turn been omitted. LW2S records single-component interventions and turns them into skip-safety labels. Component-level computation becomes a direct learning target, rather than an implicit consequence of end-to-end reward. Appendix~\ref{app:related-boundary} summarizes the boundary between this intervention view, workflow routing, early exit, and workflow compaction.

%% file: Tex/Methodology.tex
\begin{figure*}
    \centering
    \includegraphics[width=0.95\linewidth]{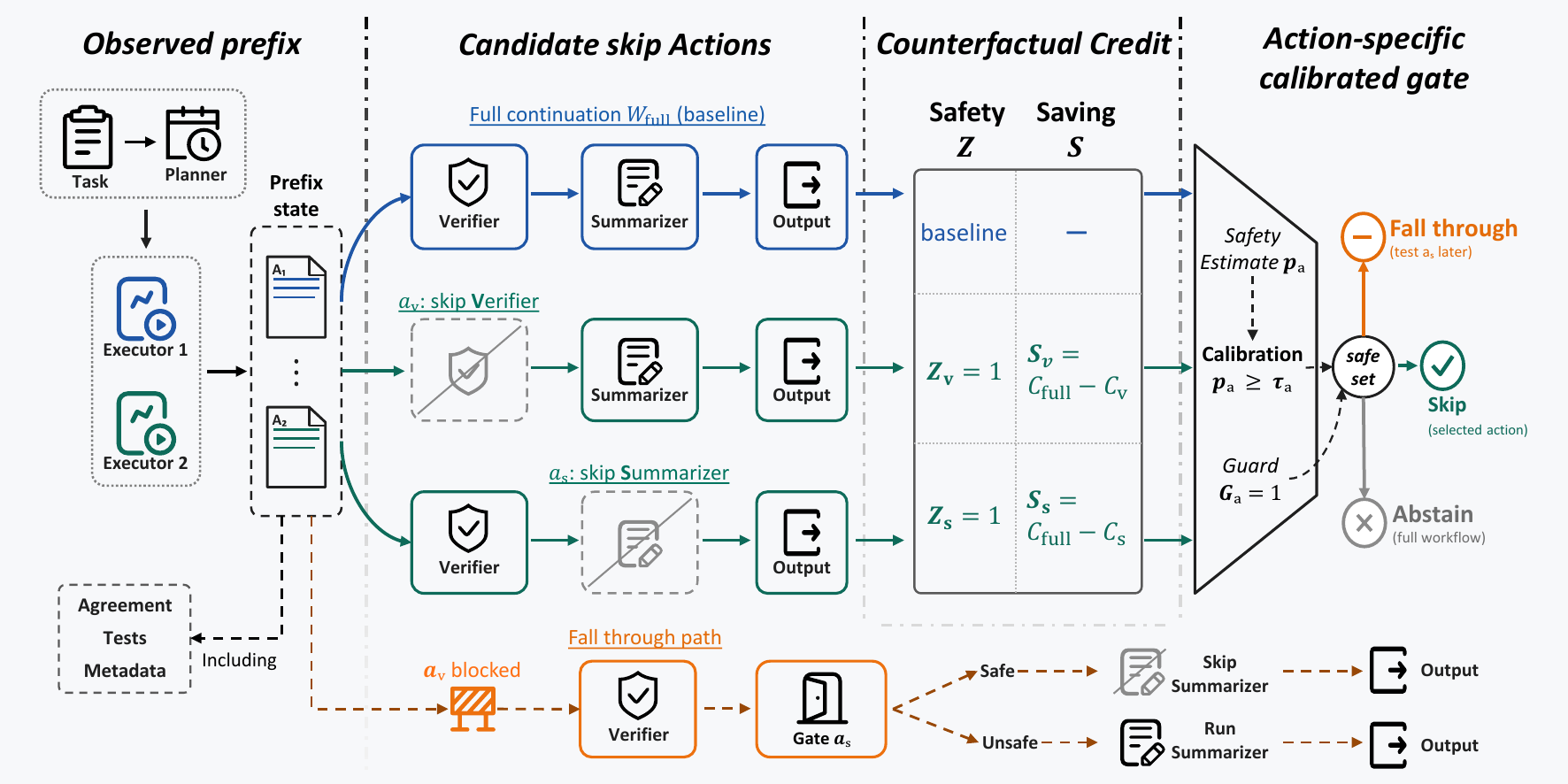}
    \vskip -0.15in
    \caption{Overview of our LW2S framework.}
    \label{fig:R2GT}
    \vskip -0.15in
\end{figure*}

\section{Methodology}
\label{sec:methodology}

LW2S learns when to omit future workflow components from controlled counterfactual traces (Figure~\ref{fig:R2GT}). It trains a separate controller over observable intermediate outputs, with recorded rewards and costs providing supervision. The base LLM remains frozen, and the controller operates through the workflow's text interface.

\subsection{Workflow Setup}
\label{sec:workflow-setup}

Let \(x\) denote a task and let a workflow \(\cW=(c_1,\ldots,c_K)\) consist of ordered components such as a planner, two executors, a verifier, and a summarizer. Executing component \(c_t\) produces an output \(o_t\) and updates the observable workflow state \(h_t=(x,o_1,\ldots,o_t)\). The full workflow produces a final answer \(y^{\mathrm{full}}\), reward \(R^{\mathrm{full}}\in[0,1]\), and cost \(C^{\mathrm{full}}\). In our experiments, the optimized cost is recorded component token cost,
\[
    C(y)=\sum_{c_t\in \mathrm{exec}(y)}
    \bigl(\mathrm{prompt\_tokens}_{t}+\mathrm{completion\_tokens}_{t}\bigr),
\]
summed over the components executed by trace \(y\). Summed component latency is reported separately for resource accounting (Appendix~\ref{app:trace-schema}); threshold selection uses token cost.

A skip action \(a\in\cA_s\) is available at state \(h_s\) when it omits one or more future components while keeping the remaining workflow semantics fixed. For example, after both executors have produced candidate solutions, the action \(\skipact\_\mathrm{verifier}\) bypasses the verifier and passes executor outputs directly to the finalization path. After the verifier has run, \(\skipact\_\mathrm{summarizer}\) bypasses final summarization and emits the verified answer.

\begin{definition}[Skip safety]
For task \(x\), state \(h_s\), and candidate skip action \(a\), let \(R^a\) and \(C^a\) be the reward and cost of the counterfactual workflow that takes action \(a\). Action is reward-preserving if
\begin{equation}
    Z_a(x,h_s) = \mathbb{I}[R^a \ge R^{\mathrm{full}}] = 1.
\end{equation}
Its realized saving is \(S_a=C^{\mathrm{full}}-C^a\). The learning problem is to select a positive-saving action with \(Z_a=1\), or abstain to the full workflow when no action is reliable.
\end{definition}

Skip safety is relative to the full workflow: preserving its reward can retain an incorrect answer when the full workflow itself fails. A skip can also improve reward by avoiding a downstream edit that damages a correct intermediate answer.

\subsection{Counterfactual Trace Collection}
\label{sec:trace-collection}

For each task, we execute the full workflow and a set of single-component interventions. The resulting trace record contains the task metadata, component outputs, executed/skipped components, final answer, reward, total cost, and latency. This produces supervised examples of the form,
\[
    (x,h_s,a,Z_a,R^a,R^{\mathrm{full}},C^a,C^{\mathrm{full}}).
\]
The key design choice is controlled intervention: only the target component is removed, so reward and cost differences are attributable to that skip action under identical task and workflow conditions.

Counterfactual trace collection precedes controller training and calibration. At deployment, the controller selects a continuation from the observed prefix. Appendix~\ref{app:intervention-protocol} specifies the intervention protocol for each skip action.

\subsection{Action-Specific Safety Models}
\label{sec:safety-models}

We train a safety model for each decision stage and skip action. For action \(a\), model estimates
\[
    p_\theta(Z_a=1\mid x,h_s,a),
\]
using text features from the task and observed component outputs, structured features extracted from intermediate answers, and task metadata. Numerical and option-letter agreement are available after the executors; verifier answer changes become available only after verification. For code tasks, executable features indicate whether the observed component outputs pass the visible MBPP unit tests. Features are constructed from the prefix available at the corresponding decision stage.

Each action has a distinct decision context. Verifier skipping depends on executor outputs, whereas summarizer skipping can use the verifier's response. Separate models associate these stage-specific features with the reward consequences of the corresponding omission.

\paragraph{Predictor implementation.}
Our implementation combines TF--IDF text features with standardized structured features in a class-balanced logistic regression model. The text branch uses lowercase unigrams and bigrams with a vocabulary capped at 1,500 features. The structured branch vectorizes prefix metadata and observable agreement or test signals. Training minimizes the regularized, class-weighted logistic loss for the binary reward-preservation label. When an action's training labels contain only one class, the implementation uses a constant predictor for that class. Each action model is fitted on training traces and then scored on calibration traces to select its acceptance threshold. Thus prediction estimates which omissions preserve reward, while calibration determines which score regions the controller will use.

\subsection{Calibration and Abstention}
\label{sec:calibration}

Let \(\cD_{\mathrm{cal}}\) be a calibration set of counterfactual traces from held-out sources such as random seeds or task slices. For each action \(a\), we sweep thresholds \(\tau\) on the predicted safety probability. A threshold is eligible only if it belongs to
\begin{equation}
\label{eq:calibration-acceptance}
    \mathcal{T}_a =
    \left\{\tau:
    \widehat{d}_a(\tau)\le d_{\max},\;
    \mathrm{UCB}_{1-\alpha}\!\left(\widehat{d}_a(\tau),n_a(\tau)\right)\le\rho,\;
    \widehat{R}_a(\tau)\ge \widehat{R}_{\mathrm{full}}
    \right\},
\end{equation}
where \(\widehat{d}_a(\tau)\) counts selected calibration examples with \(R^a<R^{\mathrm{full}}\), \(n_a(\tau)\) is the selected count, and \(\mathrm{UCB}_{1-\alpha}\) is a Wilson upper confidence bound on degradation risk, with \(\alpha\) denoting the tail probability. The final threshold is chosen from \(\mathcal{T}_a\) to maximize cost saving, with optional guard steps that move the threshold above unsafe regions of the calibration sweep. If \(\mathcal{T}_a\) is empty, the action is excluded.

With multiple calibration sources, source-stable selection requires the acceptance conditions separately on every source. Equivalently, the eligible set is the intersection of the source-specific threshold sets. This retains source-level reward losses that pooling could obscure. Section~\ref{sec:calibration-interpretation} discusses the statistical interpretation, and Appendix~\ref{app:calibration-certificates} reports the configuration and representative thresholds.

\subsection{Domain-Native Guards}
\label{sec:guards}

Calibration controls empirical skip risk, but the policy should also expose why a skip is plausible in the current state. We therefore define guard functions \(G_a(x,h_s)\in\{0,1\}\) that encode domain-native invariants. The policy may skip only when \(G_a=1\).

For math reasoning, a verifier-skip guard checks whether executors agree on the same final numeric answer. For MMLU, choice-letter consensus provides a stress test of agreement-based selection: shared misconceptions can satisfy this condition while degrading reward. For MBPP, the two-stage policy permits verifier skipping only when both executor outputs pass the visible unit tests. Its later summarizer decision uses the observed verifier state and the learned acceptance rule. Verifier test success is also evaluated as a standalone guard in Appendix~\ref{app:guard-ablation}.

A failed guard excludes the corresponding action even when the learned model assigns it high safety probability. Guards thus constrain acceptance using explicit conditions on the available workflow state.

\subsection{Selective Skip Policy}
\label{sec:selective-policy}

At a decision stage \(s\), the policy forms a safe set
\begin{equation}
\label{eq:selective-safe-set}
    \cS_s(x,h_s)=
    \left\{
    a\in\cA_s:
    p_\theta(Z_a=1\mid x,h_s,a)\ge \tau_a
    \;\wedge\;
    G_a(x,h_s)=1
    \right\}.
\end{equation}
If \(\cS_s\) is empty, the workflow executes the next component. Otherwise, the policy selects the accepted action with the largest estimated saving:
\begin{equation}
    \pi(x,h_s)=
    \argmax_{a\in\cS_s(x,h_s)} \widehat{S}_a(x,h_s).
\end{equation}

For a sequential workflow, this decision can apply more than once. In the MBPP two-stage policy, the controller first tries to skip the verifier after the executors. If the code-execution guard blocks that action, the workflow runs the verifier, then tries to skip the summarizer. This fall-through behavior is central: the policy can recover efficiency later without taking an unsafe earlier shortcut.

%% file: Tex/Analysis.tex
\section{Analysis of Counterfactual Skip Decisions}
\label{sec:analysis}

We analyze the information needed to learn skip decisions, the interpretation of empirical calibration, and the accounting of sequential fall-through.

\subsection{Why Full-Workflow Logs Are Insufficient}

The skip decision is a counterfactual question. At state \(h_s\), the full log shows the reward after executing the remaining components, but it does not show what reward would have been obtained had a future component been omitted. This creates a basic identification problem for policies trained only on full-workflow trajectories.

\begin{proposition}[Non-identifiability of skip safety from full logs]
\label{prop:nonidentifiability}
Fix a workflow prefix state \(h_s\) and a candidate skip action \(a\). Suppose the available data contain only full-workflow observations \((x,h_s,R^{\mathrm{full}},C^{\mathrm{full}})\), and suppose this distribution assigns positive probability to tasks with \(R^{\mathrm{full}}>0\). Without an intervention record or a structural assumption linking \(R^a\) to \(R^{\mathrm{full}}\), the value of
\[
    Z_a(x,h_s)=\mathbb{I}[R^a \ge R^{\mathrm{full}}]
\]
is not identifiable from the full-workflow distribution.
\end{proposition}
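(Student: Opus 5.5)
The plan is a two-model construction: build two joint laws over \((x,h_s,R^{\mathrm{full}},C^{\mathrm{full}},R^a,C^a)\) that have the same marginal on the observed coordinates \((x,h_s,R^{\mathrm{full}},C^{\mathrm{full}})\) but give different values of \(Z_a(x,h_s)\) on a set of positive probability. If two such models exist, no functional of the full-workflow distribution can equal \(Z_a\) under both, so \(Z_a\) is not identifiable. The only freedom used is the conditional law of \((R^a,C^a)\) given the observed variables. Without an intervention record or a structural link, nothing forces this conditional law to take any particular form.

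Concretely, let \(P\) be the given observational distribution. Define model \(M_1\) by \(R^a := R^{\mathrm{full}}\), for example by supposing the skipped component leaves the final answer unchanged. Then \(Z_a = 1\) almost surely. Define model \(M_0\) by \(R^a := 0\), for example by supposing the omission always yields an incorrect final answer. Then \(Z_a=\mathbb{I}[0\ge R^{\mathrm{full}}]\), which equals \(0\) exactly on the event \(\{R^{\mathrm{full}}>0\}\). In both models, set \(C^a\) to any fixed value below \(C^{\mathrm{full}}\), so that cost plays no role and, if desired, the action still has positive saving. The marginal of \((x,h_s,R^{\mathrm{full}},C^{\mathrm{full}})\) is \(P\) in both models by construction, because we only appended a deterministic function of the observed variables. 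This shows that \(M_0\) and \(M_1\) are observationally equivalent.

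Next, invoke the hypothesis that \(P(R^{\mathrm{full}}>0)>0\). On that event, \(Z_a=1\) under \(M_1\) and \(Z_a=0\) under \(M_0\). So the two models disagree on \(Z_a\) with positive probability, and even the expectation \(\mathbb{E}[Z_a]\) differs (it is \(1\) versus \(P(R^{\mathrm{full}}=0)<1\)). Hence neither the pointwise map \((x,h_s)\mapsto Z_a(x,h_s)\) nor its aggregate risk is a function of \(P\).

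The main obstacle is making sure both counterfactual models are legitimate: \(R^a\) must lie in \([0,1]\) and be realizable by some workflow whose full-execution behavior is unchanged. The natural concern is whether fixing the observed prefix and full outcome constrains the omitted branch. I would handle this by noting that \(R^a\) depends on the downstream finalization path without component \(a\), which the full log never executes. The two models can therefore be realized by two workflows that agree on every executed component and differ only in the behavior of the bypass path. The positivity assumption is exactly what rules out the degenerate case \(R^{\mathrm{full}}=0\), where \(Z_a=1\) trivially under any model.
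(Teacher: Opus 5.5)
Your proposal is correct and is essentially the paper's own argument: two counterfactual worlds that attach different laws for \(R^a\) to the same full-workflow distribution, one with \(R^a=R^{\mathrm{full}}\) (so \(Z_a\equiv 1\)) and one with \(R^a<R^{\mathrm{full}}\) on a positive-mass set, so the observed logs coincide while the skip-safety labels differ. Your concrete choice \(R^a:=0\) shows explicitly where the hypothesis that \(R^{\mathrm{full}}>0\) with positive probability is used; the paper leaves this implicit in its appeal to a positive-mass subset on which the omitted component is necessary.
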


\begin{proof}
Consider any observed full-workflow distribution over \((x,h_s,R^{\mathrm{full}},C^{\mathrm{full}})\). Construct two counterfactual worlds that share this exact distribution. In the first world, set the unobserved skip reward to \(R^a=R^{\mathrm{full}}\) for every task, so the skip action is reward-preserving. In the second world, keep the same full-workflow observations but set \(R^a<R^{\mathrm{full}}\) on any positive-mass subset of tasks where the omitted component is necessary. The observed full logs are identical in the two worlds, yet the skip-safety labels differ on that subset. No learner that observes only the full logs can distinguish these worlds.
\end{proof}

The proposition identifies a supervision gap: observing the full continuation alone leaves the reward of an omitted component undetermined. Paired skip traces supply this missing outcome for sampled tasks and prefixes. Generalizing these labels to new prefixes remains the task of the learned controller.

\subsection{Interpretation of Empirical Calibration}
\label{sec:calibration-interpretation}

Equation~\ref{eq:calibration-acceptance} defines an empirical acceptance certificate for the selected calibration region. The Wilson bound is evaluated for each candidate threshold, while the threshold itself is selected from a grid. These pointwise bounds do not provide simultaneous coverage over the threshold search or a guarantee under distribution shift. Held-out test degradation and uncertainty are therefore reported separately from calibration results (Appendix~\ref{app:uncertainty}).

Source-stable calibration constrains the eligible thresholds across observed sources; it does not prescribe a fixed test coverage. An accepted region can cover every task in a particular test slice. The MMLU row in Table~\ref{tab:uncertainty}, for example, reports 100 selected tasks out of 100. Coverage describes how often the policy takes an action, while task-level degradation measures its observed reward consequences.

\subsection{Single-Intervention Accounting}

At a fixed prefix and threshold, the guard in Equation~\ref{eq:selective-safe-set} can only exclude candidate actions. Across a sequential workflow, however, rejecting an early action can expose a later decision. The relevant accounting unit is the complete selected continuation.

\begin{proposition}[Fall-through recovery in a sequential policy]
\label{prop:fallthrough}
Consider a two-stage workflow with candidate actions \(a_1\) at stage \(s_1\) and \(a_2\) at later stage \(s_2\). If the policy attempts \(a_1\) first and falls through to \(a_2\) whenever \(a_1\) is rejected or guard-blocked, then the set of selected actions is a subset of valid single-stage interventions \(\{a_1,a_2,\mathrm{full}\}\). The policy therefore never relies on an unobserved combined intervention, and any reward/cost assigned to a selected task corresponds to an actually collected counterfactual trace.
\end{proposition}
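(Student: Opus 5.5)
The plan is a case analysis on the execution path of the two-stage policy, indexed by the decision at stage \(s_1\). First I will make the interfaces precise. The policy's output on a task is one of three continuations: the trace that omits the component targeted by \(a_1\) and runs everything else; the trace that runs the full prefix through stage \(s_2\) and omits only the component targeted by \(a_2\); or the full workflow. By the collection protocol of Section~\ref{sec:trace-collection}, each of these is a recorded trace for every task, together with its \(R\) and \(C\). Each removes at most one component relative to \(\cW\). The claim therefore reduces to showing that the policy never outputs a continuation that removes both components.

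\textbf{Case 1: \(a_1\) is accepted.} Here \(a_1\in\cS_{s_1}(x,h_{s_1})\), and the policy commits to the \(a_1\) continuation. I will fix the convention that a committed skip hands control to the finalization path, so no further skip decision is consulted. The remaining workflow is then executed with unchanged semantics, exactly as in the single-component intervention for \(a_1\). Its reward and cost are therefore \(R^{a_1}\) and \(C^{a_1}\) from the collected trace.

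\textbf{Case 2: \(a_1\) is rejected or guard-blocked.} Here the next component is executed, as Section~\ref{sec:selective-policy} prescribes when \(\cS_{s_1}\) is empty or excludes \(a_1\). The realized prefix at \(s_2\) then coincides with the prefix of the full workflow at \(s_2\). This holds because the executed components are identical and the base LLM and prompts are frozen, so the same recorded outputs are reused. The stage-\(s_2\) decision then yields either the \(a_2\) trace (if \(a_2\in\cS_{s_2}\)) or the full trace. Both are single-stage interventions, or the null intervention, on that shared prefix.

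The main obstacle is the step that identifies the policy-realized prefix \(h_{s_2}\) with the prefix recorded in the \(a_2\)-intervention trace. If regeneration were stochastic, the policy would face a fresh prefix, and the recorded \(R^{a_2}\) would no longer be literally the outcome. I will handle this by adopting the replay semantics used in evaluation: decisions are evaluated on the collected trace, whose full-run outputs up to \(s_2\) are shared with the \(a_2\) intervention by construction, since only the target component is removed. I will state this as an explicit assumption of the accounting. Combining the two cases shows that the selected action lies in \(\{a_1,a_2,\mathrm{full}\}\). The combined intervention omitting both components is never selected, so every assigned reward and cost is read from an actually collected counterfactual trace.
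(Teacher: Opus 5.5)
Your proposal is correct and follows essentially the same route as the paper, whose justification is by construction: each task is assigned the reward and token cost of exactly one recorded continuation. That continuation is skip $a_1$, execute the intermediate component and then skip $a_2$, or the full workflow, so no combined intervention or summed saving ever enters the accounting. Your case analysis on the stage-$s_1$ decision formalizes this. The two conventions you state explicitly, that accepting $a_1$ ends further skip decisions and that evaluation replays the recorded prefix at $s_2$, are what the paper leaves implicit in its policy description and intervention protocol.
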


For each task, reward and token cost are taken from the selected trace, including the verifier cost when execution reaches the later decision. Savings from separate interventions are never added. Appendix~\ref{app:intervention-protocol} specifies the outputs and accounting for each continuation.



%% file: Tex/Experiment.tex
\section{Experiments}
\label{sec:experiment}

\subsection{Experimental Setup}
\label{sec:experimental-setup}

\paragraph{Workflows.}
The main matrix uses a fixed five-role workflow with a planner, two executors, a verifier, and a summarizer. For every task, we save the full workflow and single-component skip interventions, and train only the skip controller; no base LLM weights are updated. Appendix~\ref{app:workflow-family} describes the workflow-family scope, and Appendix~\ref{app:debate-refine-mbpp} reports a second debate-refine topology with different roles and information flow.

\paragraph{Benchmarks.}
We evaluate mathematical reasoning on MATH and GSM8K, multiple-choice QA on MMLU, and code generation on MBPP. These settings expose different skip signals: final-answer agreement in math, potentially misleading option agreement in QA, and executable unit-test evidence in code.

\paragraph{Models and resources.}
The main table uses Qwen2.5-7B-Instruct and Mistral-7B-Instruct, with Qwen2.5-14B-Instruct scale-up diagnostics in Appendix~\ref{app:qwen14-scale}. The saving metric is recorded component-token reduction relative to the full trace for the same task. Resource accounting, including trace counts, token totals, latency, hardware, and API use, is reported in Appendix~\ref{app:trace-schema}.

\paragraph{Baselines and policies.}
We compare against RouteLLM \citep{routellm2024}, GraphPlanner \citep{graphplanner2026}, AutoMix \citep{automix2023}, and Prompt-LLM under the same post-prefix skip protocol. Each policy observes the same workflow prefix and either accepts a skip action or falls back to the full workflow. Fixed skip, random skip, consensus-only, and guard-only variants serve as diagnostic ablations. Appendix~\ref{app:baseline-protocol} gives the baseline specification and clarifies how routing and adaptive-computation principles are instantiated for component omission.

\input{Tab/MainEvidence}

\paragraph{Comparison controls.}
All policies use the same saved workflow traces, data partition, reward parser, and token accounting. At a decision stage, inputs are restricted to the observed prefix. RouteLLM and GraphPlanner also receive intervention-derived binary skip-safety labels; LW2S organizes prediction and calibration by action and adds source checks, guards, and sequential fall-through. Table~\ref{tab:baseline-spec} specifies the inputs, supervision, and deployment rule of each method.

\subsection{Main Results}
\label{sec:main-results}

\begin{wrapfigure}[13]{r}{0.43\textwidth}
\centering
\vskip -0.04in
\includegraphics[width=0.98\linewidth]{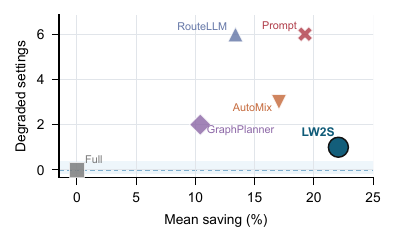}
\vskip -0.1in
\caption{Qwen2.5-7B method summary; lower right is better.}
\label{fig:risk-saving-pareto}
\vskip -0.15in
\end{wrapfigure}

LW2S reduces token cost in all six Qwen2.5-7B settings, with zero observed task-level degradations in five settings and one degradation on MMLU (Table~\ref{tab:main-evidence}). Under the shared skip protocol, RouteLLM, GraphPlanner, AutoMix, and Prompt-LLM also save tokens, and each incurs degradation in at least one setting. Figure~\ref{fig:risk-saving-pareto} summarizes this trade-off across settings; Appendix~\ref{app:safety-efficiency-ledger} provides the corresponding LW2S uncertainty estimates and full-workflow references.

On MATH, LW2S improves verifier-setting accuracy from 65.2\% to 66.4\% with 25.1\% saving and omits summarization with 18.4\% saving. It preserves full-workflow reward on GSM8K. On MMLU, aggregate accuracy remains 73.0\% with 8.6\% saving, although one task degrades. Figure~\ref{fig:main-diagnostics}(a) reports selected counts and Wilson degradation-risk bounds alongside these results. Appendix~\ref{app:uncertainty} gives the corresponding sample sizes and uncertainty estimates.

The MBPP split is the main source-shift test. On Qwen2.5-7B slice0, LW2S improves pass rate from 69.0\% to 70.0\% while saving 28.5\%. The strongest named-baseline saving reaches 24.7\% but drops pass rate to 67.0\%. On slice1, LW2S requires both learned acceptance and passing executor tests to skip the verifier. If either condition fails, it executes the verifier and evaluates the later summarizer skip. This preserves the full 63.0\% pass rate while saving 31.2\%, whereas the most aggressive Prompt-LLM baseline saves 27.0\% but drops to 61.0\%.

The same pattern survives a cross-family check with Mistral-7B. LW2S has zero observed degradations across all six settings, preserves or improves full-workflow reward, and improves the two MBPP slices by four and six pass-rate points. The competing rows are less stable: Prompt-LLM saves aggressively but degrades every setting, while AutoMix is stronger in several rows without reaching uniform zero-degradation behavior. Appendix~\ref{app:qwen14-scale} adds a Qwen2.5-14B scale-up slice, where MATH and MBPP retain skip-safe redundancy but MMLU remains a high-saving unsafe boundary. The debate-refine diagnostic in Appendix~\ref{app:debate-refine-mbpp} checks topology shift; LW2S reaches 68.0\% pass rate versus 67.0\% for the full workflow while saving 16.65\% recorded token cost.

\subsection{Attributing the Gain to Component-Level Credit}
\label{sec:component-credit}

\begin{figure}[!t]
\centering
\includegraphics[width=0.98\linewidth]{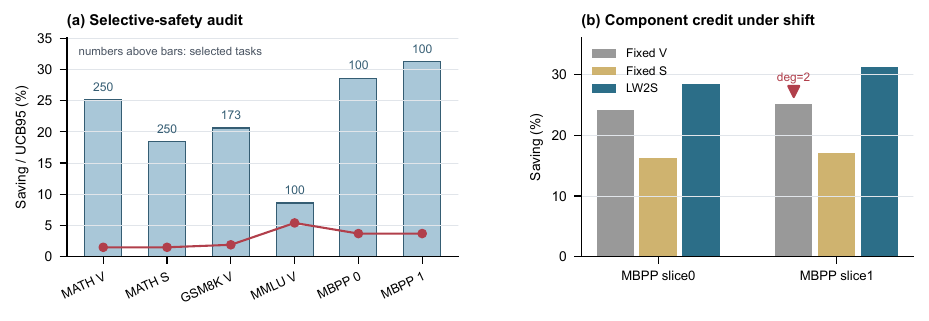}
\vskip -0.1in
\caption{Diagnostics. (a) LW2S coverage, saving, and Wilson degradation-risk bounds on Qwen2.5-7B. (b) MBPP component credit under slice shift; LW2S reallocates verifier and summarizer skips while preserving reward.}
\vskip -0.15in
\label{fig:main-diagnostics}
\end{figure}

Figure~\ref{fig:main-diagnostics}(b) examines component selection on MBPP. On slice1, fixed verifier skipping saves tokens but loses two pass-rate points. LW2S selects verifier skipping on 57 tasks and summarizer skipping after verification on the remaining selected tasks, preserving the full pass rate with the highest recorded saving. The comparison shows how access to a later omission recovers efficiency when early skipping fails; Appendix~\ref{app:component-credit-diagnostic} reports the baseline results and available action counts.

\subsection{Code Generation as a Deterministic-Checker Domain}
\label{sec:mbpp-results}

MBPP tests whether the framework transfers beyond math reasoning into a domain with executable correctness checks. On three initial 50-task seeds, summarizer skipping preserves full pass rate with zero degradations, while verifier skipping improves pass rate by three tasks in aggregate. Two disjoint unseen 100-task slices then separate safe verifier redundancy from unsafe shared-code errors.

The first slice supports verifier skipping: guarded verifier removal improves pass rate from 69\% to 70\% with zero degradations and 15.75\% saving. The second slice exposes a boundary. Unguarded verifier skipping drops pass rate from 63\% to 61\%, with two degradations where both executors produced the same incorrect code and the verifier repaired it. The code-execution guard blocks these failures and restores the full pass rate, but saves only 12.89\% when used as a single-stage verifier policy. On this slice, the two-stage LW2S policy preserves the same pass rate with 31.2\% saving by combining accepted verifier omissions with later summarizer omissions. This comparison demonstrates the efficiency recovered through later decisions when early skipping is rejected.

\subsection{What the Boundary Cases Show}
\label{sec:boundary}

The boundary cases identify the method's operating principle. On MMLU, option-letter agreement can reflect shared misconception. Appendix~\ref{app:mmlu-failure-cases} reports a case where both executors choose the same wrong option and the verifier repairs the answer. On MBPP, abstract-syntax equality between executor programs can reflect shared buggy logic. Appendix~\ref{app:guard-ablation} shows that AST equality alone selects 94 of 100 slice1 tasks and incurs two degradations, while executable evidence blocks those failures. Appendix~\ref{app:failure-taxonomy} organizes these cases together with redundant late summarization. In each regime, LW2S treats disagreement between calibration, guards, and observed state as a reason to abstain or fall through to a later decision.

\subsection{Component Utility under Topology Shift}
\label{sec:topology-results}

The debate-refine workflow replaces the main template's executor interaction with opposing code proposals followed by a critic and finalizer. On the 100-task MBPP slice, LW2S reaches 68.0\% pass rate with 16.65\% saving, compared with 67.0\% for full execution. Fixed summarizer skipping reaches the same operating point, while fixed verifier skipping preserves 67.0\% pass rate with 27.42\% saving. All three have zero observed degradations (Table~\ref{tab:debate-refine-mbpp}). These results identify a topology where a fixed late omission already captures the realized LW2S benefit. In contrast, the main MBPP slice1 requires retaining the verifier on part of the slice and recovering savings through later summarizer decisions. Together, the two workflows distinguish broadly redundant components from state-dependent omission opportunities.



%% file: Tab/MainEvidence.tex
\begin{table}[!t]
\centering
\caption{Main same-protocol evidence matrix. Acc. and Save are percentages; Acc. superscripts count task-level degradations against the full workflow. No superscript means zero observed degradation. Mistral MMLU averages three rotations.}
\vskip -0.1in
\label{tab:main-evidence}
\small
\setlength{\tabcolsep}{2.6pt}
\renewcommand{\arraystretch}{0.95}
\resizebox{\textwidth}{!}{%
\begin{tabular}{@{}llcrrrrrr@{}}
\toprule
\textbf{Backbone} & \textbf{Setting} & \textbf{Metric} & \textbf{Full} & \textbf{RouteLLM} & \textbf{GraphPlanner} & \textbf{AutoMix} & \textbf{Prompt-LLM} & \textbf{LW2S} \\
\midrule
\multirow{12}{*}{Qwen2.5-7B} & \multirow{2}{*}{MATH verifier} & Acc. & 65.2 & 64.8$^{1}$ & 65.2$^{1}$ & 65.6 & 64.8$^{1}$ & \textbf{66.4} \\
& & Save & 0.0 & 14.6 & 11.2 & 18.3 & 20.5 & \textbf{25.1} \\
\addlinespace[0.2pt]
& \multirow{2}{*}{MATH summarizer} & Acc. & 65.2 & 64.8$^{1}$ & 65.2 & 65.2 & 64.8$^{1}$ & \textbf{65.2} \\
& & Save & 0.0 & 11.8 & 9.5 & 14.2 & 16.5 & \textbf{18.4} \\
\addlinespace[0.2pt]
& \multirow{2}{*}{GSM8K verifier} & Acc. & 88.5 & 88.0$^{1}$ & 88.5 & 88.0$^{1}$ & 88.0$^{1}$ & \textbf{88.5} \\
& & Save & 0.0 & 12.7 & 8.9 & 15.8 & 19.1 & \textbf{20.6} \\
\addlinespace[0.2pt]
& \multirow{2}{*}{MMLU verifier} & Acc. & 73.0 & 73.0$^{1}$ & 73.0$^{1}$ & 73.0$^{1}$ & 72.0$^{1}$ & \textbf{73.0}$^{1}$ \\
& & Save & 0.0 & 5.8 & 4.3 & 7.0 & 7.7 & \textbf{8.6} \\
\addlinespace[0.2pt]
& \multirow{2}{*}{MBPP slice0} & Acc. & 69.0 & 68.0$^{1}$ & 69.0 & 69.0 & 67.0$^{2}$ & \textbf{70.0} \\
& & Save & 0.0 & 16.9 & 13.4 & 22.1 & 24.7 & \textbf{28.5} \\
\addlinespace[0.2pt]
& \multirow{2}{*}{MBPP slice1} & Acc. & 63.0 & 62.0$^{1}$ & 63.0 & 62.0$^{1}$ & 61.0$^{2}$ & \textbf{63.0} \\
& & Save & 0.0 & 18.5 & 15.2 & 24.9 & 27.0 & \textbf{31.2} \\
\midrule
\multirow{12}{*}{Mistral-7B} & \multirow{2}{*}{MATH verifier} & Acc. & 12.8 & 12.4$^{1}$ & 12.8 & 12.6$^{1}$ & 12.2$^{1}$ & \textbf{12.8} \\
& & Save & 0.0 & 11.3 & 8.8 & 14.6 & 17.0 & \textbf{19.1} \\
\addlinespace[0.2pt]
& \multirow{2}{*}{MATH summarizer} & Acc. & 12.8 & 12.8 & 13.0 & 13.2 & 12.4$^{1}$ & \textbf{14.0} \\
& & Save & 0.0 & 12.9 & 10.4 & 16.3 & 18.6 & \textbf{21.8} \\
\addlinespace[0.2pt]
& \multirow{2}{*}{GSM8K verifier} & Acc. & 39.5 & 38.7$^{1}$ & 39.1$^{1}$ & 39.0$^{1}$ & 38.3$^{1}$ & \textbf{39.5} \\
& & Save & 0.0 & 13.6 & 10.7 & 17.5 & 20.1 & \textbf{22.2} \\
\addlinespace[0.2pt]
& \multirow{2}{*}{MMLU verifier} & Acc. & 55.0 & 54.6$^{1}$ & 55.4 & 55.8 & 54.2$^{1}$ & \textbf{57.2} \\
& & Save & 0.0 & 15.7 & 12.1 & 19.2 & 22.4 & \textbf{25.3} \\
\addlinespace[0.2pt]
& \multirow{2}{*}{MBPP slice0} & Acc. & 28.0 & 28.0 & 29.0 & 30.0 & 27.0$^{1}$ & \textbf{32.0} \\
& & Save & 0.0 & 14.1 & 11.5 & 18.3 & 21.0 & \textbf{23.7} \\
\addlinespace[0.2pt]
& \multirow{2}{*}{MBPP slice1} & Acc. & 25.0 & 25.0 & 26.0 & 28.0 & 24.0$^{1}$ & \textbf{31.0} \\
& & Save & 0.0 & 14.5 & 11.8 & 18.6 & 21.4 & \textbf{24.1} \\
\bottomrule
\end{tabular}
}
\end{table}

%% file: Tex/Conclusion.tex
\section{Conclusion}




We introduced LW2S, which learns when to omit future workflow components through counterfactual supervision, action-specific calibration, and domain-native guards. Controlled interventions expose the reward consequences of a skipped step, providing a learning signal absent from full-workflow logs alone. Across the evaluated reasoning and coding tasks, LW2S reduces token cost while matching or improving aggregate full-workflow accuracy. The component and topology analyses distinguish broadly redundant calls from omissions that depend on the current state. Shared-error cases show why agreement alone is insufficient, while sequential fall-through recovers savings when an earlier skip is rejected. These findings motivate learning the conditional utility of workflow components as a basis for adaptive multi-agent execution.

%% file: Tex/Appendix.tex
\section{Protocols, Diagnostics, and Additional Evidence}
\label{app:additional-details}

We detail trace collection, intervention protocols, baseline implementations, and calibration, followed by scale-up, topology, and failure analyses. Throughout, \emph{accuracy} reports aggregate task reward and \emph{Deg.} counts individual tasks on which a policy underperforms the full workflow. Improvements on some tasks can offset degradations on others, so we report both quantities alongside token savings.

\subsection{Trace Schema, Splits, and Reproducibility}
\label{app:trace-schema}

\paragraph{Trace schema.}
Each JSONL record corresponds to one benchmark task and contains the task identifier, benchmark split, prompt, gold target, full-workflow component outputs, single-component skip-intervention outputs, parsed rewards, component token counts, component latency, and structured state metadata. The metadata includes executor agreement, verifier decision fields, code-execution outcomes, and guard values. This design makes the unit of evidence a task-level counterfactual bundle rather than a single model response: for the same input, we can compare the full workflow against controlled omissions of future components.

\paragraph{Split discipline.}
Thresholds are selected on calibration traces before evaluation on a held-out trace source. With multiple calibration sources, a threshold must satisfy the degradation criterion separately on each source. If none qualifies, LW2S retains the full workflow or, in the MBPP two-stage setting, falls through to the later summarizer decision. Test intervention rewards are used to evaluate the selected policy.

\paragraph{Resource ledger.}
Table~\ref{tab:resource-accounting} reports the trace-generation resource ledger used by the paper. The table separates the main fixed-template matrix, the Mistral cross-family matrix, the Qwen2.5-14B scale-up diagnostics, and the second debate-refine workflow. This separation matters because the second workflow is used as a topology diagnostic rather than as part of the original main-matrix evidence.

\input{Tab/ResourceAccounting}

The accounting unit is a saved full/intervention trace. Token cost is computed from recorded prompt and completion tokens for the components executed in that trace. Reported latency is summed component latency from the same records. It is included to make the computational footprint auditable, but it is not used to tune thresholds or select policies. We do not reconstruct end-to-end wall-clock time from these logs because runs can be interrupted, resumed, and scheduled differently; using recorded component latency gives a stable per-trace resource diagnostic. All reported traces use open-weight models on one NVIDIA RTX PRO 6000 Blackwell Server Edition, and no commercial API calls are used.

\subsection{Workflow-Family Coverage}
\label{app:workflow-family}

The main matrix fixes the planner--executor--executor--verifier--summarizer template to make intervention labels comparable across benchmarks. The debate-refine workflow changes agent roles and information flow, testing whether component-level skip opportunities persist under a different topology. Table~\ref{tab:workflow-family} lists the templates, backbones, and tasks covered by these evaluations.

\input{Tab/WorkflowFamilyMain}

\subsection{Boundary to Adjacent Paradigms}
\label{app:related-boundary}

Table~\ref{tab:related-boundary} positions LW2S relative to routing, early-exit, counterfactual-credit, and workflow-compaction methods. The key distinction is the decision target. A model router selects which model to call; a workflow router chooses a graph or agent allocation; an early-exit method rejects or emits a prediction based on confidence. LW2S instead observes a partially executed multi-agent workflow and asks whether a specific future component is still needed for that task.

\input{Tab/RelatedBoundary}

Table~\ref{tab:related-boundary} compares the decision targets and supervision of these paradigms. LW2S learns the reward consequences of individual omissions. Its sequential policy (Section~\ref{sec:selective-policy}) can retain an early component and reconsider a later one, allowing the omission to depend on the evolving workflow state.

\subsection{Intervention Protocol}
\label{app:intervention-protocol}

The intervention protocol in Table~\ref{tab:intervention-protocol} fixes the workflow template, prompts, decoding settings, answer parsers, and task family while changing exactly one future component. The observed prefix is the state available to the controller at test time. For verifier skipping, the controller has seen both executor outputs but has not run the verifier. For summarizer skipping, it has seen the verifier output and decides whether the summarizer is still needed. This design prevents leakage from components that would not have been available at the decision point.

\input{Tab/InterventionProtocol}

The MBPP two-stage policy applies the learned threshold and code guard jointly at the verifier decision. Rejection by either condition leads to verifier execution and a later summarizer decision. Each task is assigned the reward and token cost of exactly one recorded continuation: skip verifier, skip summarizer, or full workflow. Accordingly, the number satisfying a standalone guard and the number selected by LW2S measure different predicates, even on identical tasks and traces.

\subsection{Baseline Evaluation Protocol}
\label{app:baseline-protocol}

The named baselines share LW2S's saved workflow traces, train/calibration/test partition, reward parser, and cost accounting. At test time, each decision rule receives the observed prefix and its available tool outputs; intervention rewards remain evaluation targets. This protocol holds workflow execution fixed across decision rules.

\input{Tab/BaselineSpec}

Table~\ref{tab:baseline-spec} specifies the implementations evaluated through the shared component-omission interface. RouteLLM \citep{routellm2024} uses observed prefix features for a binary skip decision, and GraphPlanner \citep{graphplanner2026} augments the prefix with memory features. Both receive binary skip-safety supervision derived from the intervention traces. AutoMix \citep{automix2023} uses trace reward and cost for cost-sensitive selection, while Prompt-LLM asks the backbone to judge a candidate skip in context. LW2S organizes prediction and calibration by action and combines source checks with guards and sequential fall-through. Thus the comparison varies the decision rule and its use of supervision; intervention-derived labels are shared by several methods. Fixed skip, random skip, consensus-only, and guard-only variants serve as diagnostic ablations.

\subsection{Qwen2.5-14B Scale-Up Diagnostics}
\label{app:qwen14-scale}

The Qwen2.5-14B experiment evaluates fixed and consensus-based omissions on MATH seed55, MMLU seed2, and MBPP slice1 (Table~\ref{tab:qwen14-scale}). These diagnostic policies test how component utility changes with backbone scale. Verifier and summarizer redundancy persists on MATH, and MBPP retains code-generation skip opportunities. On MBPP, however, summarizer removal incurs an individual degradation despite increasing aggregate pass rate.

\input{Tab/ScaleUpQwen14}

On MMLU, fixed verifier skipping raises accuracy from 82.0\% to 83.0\%, and executor-consensus skipping raises it to 84.0\%. Both also cause individual degradations: gains on other tasks offset the loss of answers repaired by the verifier. Shared-error failures therefore persist with the larger backbone, even where average accuracy and token savings improve.

\subsection{Second-Workflow Debate-Refine Diagnostic}
\label{app:debate-refine-mbpp}

To examine component utility under topology shift, we evaluate a debate-refine workflow on MBPP. A framer initializes the problem, two agents produce opposing code proposals, a critic/verifier inspects the candidates, and a finalizer emits the answer. This changes the agent roles and information flow while retaining the verifier and summarizer intervention interfaces used by LW2S.

Section~\ref{sec:topology-results} compares the resulting LW2S operating point with fixed verifier and summarizer omissions. Table~\ref{tab:debate-refine-mbpp} additionally reports the two decision stages, hindsight selection, and consensus- and code-guard policies.

Table~\ref{tab:debate-refine-mbpp} reports the parsed result for this second workflow. The full debate-refine workflow reaches 67.0\% pass rate on the 100-task MBPP slice. The LW2S rows improve aggregate pass rate to 68.0\% while saving 16.65\% of token cost. The hindsight diagnostic indicates that additional reward-improving omissions exist in the recorded interventions, while fixed and guard-only rows show that this evaluated slice also contains simple safe omissions. All listed rows have zero observed task-level degradations on this evaluated slice. The result serves a specific generalization purpose: the skip-credit signal is not confined to the original fixed-template workflow.

\input{Tab/DebateRefineMBPP}

\subsection{Unified Safety-Efficiency Ledger}
\label{app:safety-efficiency-ledger}

Table~\ref{tab:safety-efficiency-ledger} consolidates LW2S accuracy, task-level degradation, Wilson upper bounds, and token savings across backbones and workflows. The full-workflow rows provide the accuracy and cost references for each comparison.

\input{Tab/SafetyEfficiencyLedger}

The Qwen2.5-7B entries complement the method-level comparison in Figure~\ref{fig:risk-saving-pareto}. The additional rows report cross-family and topology results under the same accuracy, degradation, and cost definitions.

\subsection{Calibration Certificates and Uncertainty}
\label{app:calibration-certificates}

Calibration follows Equation~\ref{eq:calibration-acceptance}, with \(\alpha=0.2\), \(d_{\max}=0\), \(\rho=0.15\), and a threshold grid in increments of 0.05. Source-stable variants apply the acceptance conditions separately to each calibration source. Table~\ref{tab:calibration-certificate} reports representative accepted regions. Section~\ref{sec:calibration-interpretation} explains the statistical interpretation of this threshold search.

\input{Tab/CalibrationCertificate}

The columns list calibration size, selected threshold, observed degradation, UCB$_{80}$, and calibration saving. Threshold selection uses these calibration measurements before held-out test evaluation.

Table~\ref{tab:uncertainty} complements the calibration view with test-set coverage and uncertainty diagnostics. The selected column counts how many tasks were routed away from the full workflow by LW2S. Five settings have zero observed degradations, for which the Wilson upper bound depends on sample size. MMLU has one degradation among 100 tasks, yielding a 5.4\% upper bound, compared with 3.7\% for the zero-degradation 100-task MBPP slices.

\subsection{LW2S Coverage and Degradation-Risk Diagnostics}
\label{app:uncertainty}

\input{Tab/Uncertainty}

Coverage and saving measure different aspects of the policy. Coverage counts tasks assigned a skip, whereas saving also depends on the token cost of the omitted continuation. Thus similar selected counts can yield different savings across settings. The MMLU entry reports full test coverage (100/100); source-wise calibration does not imply low coverage on this test slice. The degradation count and Wilson bound describe the corresponding observed reward losses and finite-sample uncertainty.

\subsection{Component-Level Credit Diagnostic}
\label{app:component-credit-diagnostic}

Figure~\ref{fig:main-diagnostics}(b) and Table~\ref{tab:component-discriminative} report the MBPP component-selection diagnostic. The LW2S rows match Table~\ref{tab:main-evidence}: slice0 selects 69 verifier skips and 31 summarizer skips, with 70.0\% accuracy and 28.5\% saving; slice1 selects 57 and 43, respectively, with 63.0\% accuracy and 31.2\% saving. Both have zero observed task-level degradations. These action counts show how the policy allocates omissions across the two decision stages.

\input{Tab/ComponentDiscriminative}

\subsection{Failure and Abstention Taxonomy}
\label{app:failure-taxonomy}

Table~\ref{tab:failure-taxonomy} groups the boundary cases into three mechanism-level regimes: shared wrong consensus, superficial code agreement, and redundant late summarization. The taxonomy explains why LW2S sometimes skips aggressively, sometimes abstains, and sometimes delays the skip decision to a later workflow state.

\input{Tab/FailureTaxonomy}

The first two regimes expose failures of agreement-based selection. MMLU executors can share a wrong option, and MBPP executors can share a bug despite AST-level similarity. Redundant late summarization provides a complementary opportunity: a verifier or checked intermediate output may already contain the final answer. Together, these regimes explain why component utility depends on the observed prefix and why a blocked early omission can leave a useful later skip available.

\subsection{Guard Ablations}
\label{app:guard-ablation}

Table~\ref{tab:mbpp-guard-ablation} compares prefix-available guard conditions and an offline structural diagnostic on two unseen 100-task MBPP slices. Executor test outcomes and executor AST equality are available before verification; verifier test outcomes become available after verification. Verifier/summarizer AST equality requires the completed summarizer output and is evaluated in hindsight to characterize unchanged code. It is excluded from online summarizer-skip decisions.

\input{Tab/MBPPGuardAblation}

AST equality is the clearest failure mode. On slice0 it appears safe, selecting 93 tasks with no degradation. On slice1 it selects 94 tasks but incurs two degradations, because both executors can share the same syntactic or structural bug. Guards that require executable evidence, such as both executors passing visible tests, are more conservative but block those slice1 failures. This supports the main design principle: guards should encode domain-native invariants rather than superficial agreement signals.

\subsection{MMLU Failure Case}
\label{app:mmlu-failure-cases}

Table~\ref{tab:mmlu-failure-cases} reports the MMLU failure case: both executors select option A, while the verifier corrects the answer to C. Skipping verification retains the shared error.

\input{Tab/MMLUFailureCases}

Answer agreement is compatible with both redundant verification and a shared error that verification repairs. Intervention labels distinguish these outcomes during training, while source-specific calibration determines which predicted skip regions are accepted. This case illustrates the feature ambiguity behind agreement-based selection and the observed MMLU degradation despite unchanged aggregate accuracy.

%% file: Tab/ResourceAccounting.tex
\begin{table}[!h]
\centering
\caption{Resource accounting for reported open-weight traces. Five-role denotes planner, two executors, verifier, and summarizer; Debate denotes debate-refine. Latency is summed component latency and is not used for threshold selection; no commercial API calls are used.}
\vskip -0.1in
\label{tab:resource-accounting}
\scriptsize
\renewcommand{\arraystretch}{1.08}
\setlength{\tabcolsep}{2.6pt}
\begin{tabular}{@{}L{0.16\textwidth}L{0.12\textwidth}L{0.09\textwidth}rrrrL{0.10\textwidth}L{0.04\textwidth}@{}}
\toprule
Block & Backbone & Workflow & Tasks & Traces & Tokens (M) & Latency (h) & GPU & API \\
\midrule
Main matrix & Qwen2.5-7B & Five-role & 1100 & 6600 & 12.74 & 9.02 & PRO6000 & none \\
Model generalization & Mistral-7B & Five-role & 1490 & 8940 & 20.31 & 14.52 & PRO6000 & none \\
Scale-up diagnostic & Qwen2.5-14B & Five-role & 250 & 1500 & 3.27 & 4.62 & PRO6000 & none \\
Second workflow & Qwen2.5-7B & Debate & 200 & 1200 & 3.14 & 2.90 & PRO6000 & none \\
Second workflow & Qwen2.5-7B & Debate & 250 & 1500 & 3.83 & 2.97 & PRO6000 & none \\
\midrule
\textbf{Total} & -- & -- & 3290 & 19740 & 43.30 & 34.04 & PRO6000 & none \\
\bottomrule
\end{tabular}
\vskip -0.1in
\end{table}

%% file: Tab/WorkflowFamilyMain.tex
\begin{table}[!t]
\centering
\caption{Workflow-family coverage. The main matrix uses a controlled five-role template; scale-up and debate-refine rows test persistence beyond one backbone and one topology.}
\vskip -0.1in
\label{tab:workflow-family}
\scriptsize
\setlength{\tabcolsep}{3.5pt}
\begin{tabular}{@{}L{0.20\linewidth}L{0.21\linewidth}L{0.17\linewidth}L{0.34\linewidth}@{}}
\toprule
Workflow family & Benchmarks & Backbones & Evidence role \\
\midrule
Five-role & MATH, GSM8K, MMLU, MBPP & Qwen2.5-7B, Mistral-7B & Main safety--efficiency matrix with same-protocol named baselines and calibrated skip policies. \\
Scale-up fixed template & MATH, MMLU, MBPP & Qwen2.5-14B & Larger-backbone diagnostic for skip-safe redundancy and unsafe high-saving shortcuts. \\
Debate-refine & MBPP & Qwen2.5-7B & Topology diagnostic with different roles and information flow from the main workflow. \\
\bottomrule
\end{tabular}
\vskip -0.1in
\end{table}

%% file: Tab/RelatedBoundary.tex
\begin{table}[!t]
\centering
\caption{Technical boundary between LW2S and adjacent efficiency paradigms. The key distinction is component-level skip supervision from controlled interventions after a workflow prefix has been observed.}
\vskip -0.1in
\label{tab:related-boundary}
\scriptsize
\setlength{\tabcolsep}{3pt}
\renewcommand{\arraystretch}{1.08}
\begin{tabular}{@{}L{0.17\textwidth}L{0.23\textwidth}L{0.22\textwidth}L{0.31\textwidth}@{}}
\toprule
Method family & Decision target & Supervision or evidence & Boundary to LW2S \\
\midrule
Model routers & choose a model before execution & preference or task reward & Optimize model/cost trade-off; no component-level skip intervention labels or sequential fall-through. \\
Graph/workflow routers & choose a graph or agent allocation & end-to-end reward & Estimate workflow utility; usually does not identify whether a specific future component is safely omissible. \\
Selective prediction and early exit & emit, reject, or exit at a confidence gate & labels plus reject loss & Controls prediction risk--coverage; does not intervene on verifier/summarizer components in an observed workflow prefix. \\
Counterfactual MARL credit & assign action advantage during policy learning & counterfactual returns & Uses counterfactual credit, but it is not a calibrated deployment-time skip policy over LLM workflow components. \\
Workflow compaction & replace an evaluated workflow before deployment & workflow search or evaluation & Finds fewer turns or a single-agent simulation; no per-task fall-through after a blocked component skip. \\
\textbf{LW2S} & omit a specific future component after observing a prefix & full and single-skip traces & Learns action-specific skip labels, calibrates source-stable risk, and can fall through to later skip decisions. \\
\bottomrule
\end{tabular}
\vskip -0.1in
\end{table}

%% file: Tab/InterventionProtocol.tex
\begin{table}[!t]
\centering
\caption{Controlled intervention protocol. Prompts, decoding settings, parsers, and workflow templates are fixed within each task family. Each selected continuation omits a single target component.}
\vskip -0.1in
\label{tab:intervention-protocol}
\scriptsize
\setlength{\tabcolsep}{2.6pt}
\renewcommand{\arraystretch}{1.08}
\begin{tabular}{@{}L{0.13\textwidth}L{0.18\textwidth}L{0.12\textwidth}L{0.33\textwidth}L{0.11\textwidth}@{}}
\toprule
Action & Observed prefix & Removed component & Emitted/evaluated output & Accounting \\
\midrule
skip verifier & after both executors & verifier & summarizer or finalizer consumes executor answers; the domain parser scores the final answer & token cost; latency logged separately \\
skip summarizer & after verifier & summarizer & verified answer is emitted directly and parsed by the same task parser & token cost; latency logged separately \\
two-stage MBPP & after executors, then after verifier if blocked & one component per task & selected recorded intervention: skip verifier, skip summarizer, or full workflow; unit tests score the result & token cost of selected trace; latency logged separately \\
\bottomrule
\end{tabular}
\vskip -0.1in
\end{table}

%% file: Tab/BaselineSpec.tex
\begin{table}[!t]
\centering
\caption{Baseline inputs, supervision, and decision rules under the shared component-omission protocol. Methods use the same saved traces and data partition.}
\vskip -0.1in
\label{tab:baseline-spec}
\scriptsize
\setlength{\tabcolsep}{3pt}
\renewcommand{\arraystretch}{1.08}
\begin{tabular}{@{}L{0.15\textwidth}L{0.18\textwidth}L{0.20\textwidth}L{0.16\textwidth}L{0.25\textwidth}@{}}
\toprule
Method & Input state & Training signal & Calibration data & Deployment rule \\
\midrule
RouteLLM & observed prefix features & binary skip-safe label & same split & threshold sweep; fallback to full workflow \\
GraphPlanner & prefix plus memory features & binary skip-safe label & same split & threshold sweep; fallback to full workflow \\
AutoMix & prefix confidence and action value & trace reward and cost & same split & value threshold; fallback to full workflow \\
Prompt-LLM & task, prefix, candidate action & in-context skip judgment & calibration split & probability threshold; fallback to full workflow \\
\textbf{LW2S} & prefix, agreement, metadata & action-specific intervention label & same split plus source checks & source-stable threshold with guard; fallback to full or later skip \\
\bottomrule
\end{tabular}
\vskip -0.1in
\end{table}

%% file: Tab/ScaleUpQwen14.tex
\begin{table}[!t]
\centering
\caption{Qwen2.5-14B scale-up diagnostics. Rows test whether skip-safe and boundary phenomena persist under a larger backbone; Save is token-cost reduction.}
\vskip -0.1in
\label{tab:qwen14-scale}
\scriptsize
\setlength{\tabcolsep}{3pt}
\renewcommand{\arraystretch}{1.08}
\begin{tabular}{@{}L{0.12\textwidth}L{0.22\textwidth}rrrrL{0.34\textwidth}@{}}
\toprule
Setting & Policy & Full & Acc. & Deg. & Save & Interpretation \\
\midrule
MATH seed55 & fixed skip verifier & 66.0 & 70.0 & 0 & 25.0 & verifier removal improved reward on this evaluated slice \\
MATH seed55 & executor-consensus verifier skip & 66.0 & 68.0 & 0 & 13.5 & guarded redundancy remained after scaling the backbone \\
MATH seed55 & fixed skip summarizer & 66.0 & 66.0 & 0 & 17.2 & summarizer removal preserved full-workflow reward \\
MMLU seed2 & fixed skip verifier & 82.0 & 83.0 & 3 & 23.6 & boundary case: high saving did not imply safe skipping \\
MMLU seed2 & executor-consensus verifier skip & 82.0 & 84.0 & 2 & 22.0 & consensus did not remove the shared-error failure mode \\
MBPP slice1 & fixed skip verifier & 69.0 & 70.0 & 0 & 24.5 & verifier redundancy persisted on the scaled coding slice \\
MBPP slice1 & fixed skip summarizer & 69.0 & 72.0 & 1 & 17.0 & summarizer removal was not uniformly safe at this scale \\
\bottomrule
\end{tabular}
\vskip -0.1in
\end{table}

%% file: Tab/DebateRefineMBPP.tex
\begin{table}[!t]
\centering
\caption{MBPP debate-refine diagnostic on Qwen2.5-7B. Acc. is pass rate, Deg. counts task-level degradations, and Save is token-cost reduction.}
\vskip -0.1in
\label{tab:debate-refine-mbpp}
\scriptsize
\setlength{\tabcolsep}{3.5pt}
\renewcommand{\arraystretch}{1.08}
\begin{tabular}{@{}L{0.24\textwidth}rrrrL{0.42\textwidth}@{}}
\toprule
Policy & Full & Acc. & Deg. & Save & Interpretation \\
\midrule
Full workflow & 67.0 & 67.0 & 0 & 0.00 & reference execution of the debate-refine workflow \\
LW2S after executors & 67.0 & 68.0 & 0 & 16.65 & learned verifier/summarizer skip decision after observing proposer and opponent outputs \\
LW2S after verifier & 67.0 & 68.0 & 0 & 16.65 & learned summarizer skip decision after the critic/verifier state is available \\
Hindsight reward-improving skip & 67.0 & 69.0 & 0 & 25.79 & hindsight diagnostic, not a deployed policy \\
Fixed skip verifier & 67.0 & 67.0 & 0 & 27.42 & high-saving fixed intervention that is safe on this evaluated slice \\
Fixed skip summarizer & 67.0 & 68.0 & 0 & 16.65 & fixed late-component omission in the debate-refine workflow \\
Executor-consensus skip verifier & 67.0 & 68.0 & 0 & 16.30 & guard-only early skip based on proposer/opponent agreement \\
Code-guard skip verifier & 67.0 & 68.0 & 0 & 15.42 & guard-only early skip using executable-code evidence \\
\bottomrule
\end{tabular}
\vskip -0.1in
\end{table}

%% file: Tab/SafetyEfficiencyLedger.tex
\begin{table}[!t]
\centering
\caption{Unified safety-efficiency ledger for headline LW2S evidence. UCB$_{95}$ is a Wilson upper bound on observed task-level degradation for the evaluated slice; Save is token-cost reduction.}
\vskip -0.1in
\label{tab:safety-efficiency-ledger}
\tiny
\setlength{\tabcolsep}{2.1pt}
\renewcommand{\arraystretch}{1.02}
\begin{tabular}{@{}L{0.16\textwidth}L{0.09\textwidth}L{0.075\textwidth}L{0.17\textwidth}rrrrr@{}}
\toprule
Setting & Backbone & Workflow & Policy & Full & Acc. & Deg. & UCB$_{95}$ & Save \\
\midrule
MATH verifier & Mistral-7B & Five-role & LW2S transfer & 12.8 & 12.8 & 0 & 1.5 & 19.1 \\
MATH verifier & Qwen2.5-7B & Five-role & LW2S final & 65.2 & 66.4 & 0 & 1.5 & 25.1 \\
MATH verifier & Qwen2.5-7B & Five-role & Full workflow & 65.2 & 65.2 & 0 & 1.5 & 0.0 \\
\addlinespace[0.4pt]
MATH summarizer & Mistral-7B & Five-role & LW2S transfer & 12.8 & 14.0 & 0 & 1.5 & 21.8 \\
MATH summarizer & Qwen2.5-7B & Five-role & LW2S final & 65.2 & 65.2 & 0 & 1.5 & 18.4 \\
MATH summarizer & Qwen2.5-7B & Five-role & Full workflow & 65.2 & 65.2 & 0 & 1.5 & 0.0 \\
\addlinespace[0.4pt]
GSM8K verifier & Mistral-7B & Five-role & LW2S transfer & 39.5 & 39.5 & 0 & 1.9 & 22.2 \\
GSM8K verifier & Qwen2.5-7B & Five-role & LW2S final & 88.5 & 88.5 & 0 & 1.9 & 20.6 \\
GSM8K verifier & Qwen2.5-7B & Five-role & Full workflow & 88.5 & 88.5 & 0 & 1.9 & 0.0 \\
\addlinespace[0.4pt]
MMLU verifier & Mistral-7B & Five-role & LW2S transfer & 55.0 & 57.2 & 0 & 1.3 & 25.3 \\
MMLU verifier & Qwen2.5-7B & Five-role & LW2S final & 73.0 & 73.0 & 1 & 5.4 & 8.6 \\
MMLU verifier & Qwen2.5-7B & Five-role & Full workflow & 73.0 & 73.0 & 0 & 3.7 & 0.0 \\
\addlinespace[0.4pt]
MBPP slice0 two-stage & Mistral-7B & Five-role & LW2S transfer & 28.0 & 32.0 & 0 & 3.7 & 23.7 \\
MBPP slice0 two-stage & Qwen2.5-7B & Five-role & LW2S final & 69.0 & 70.0 & 0 & 3.7 & 28.5 \\
MBPP slice0 two-stage & Qwen2.5-7B & Five-role & Full workflow & 69.0 & 69.0 & 0 & 3.7 & 0.0 \\
\addlinespace[0.4pt]
MBPP slice1 two-stage & Mistral-7B & Five-role & LW2S transfer & 25.0 & 31.0 & 0 & 3.7 & 24.1 \\
MBPP slice1 two-stage & Qwen2.5-7B & Five-role & LW2S final & 63.0 & 63.0 & 0 & 3.7 & 31.2 \\
MBPP slice1 two-stage & Qwen2.5-7B & Five-role & Full workflow & 63.0 & 63.0 & 0 & 3.7 & 0.0 \\
\addlinespace[0.4pt]
MBPP debate-refine & Qwen2.5-7B & Debate & LW2S after exec. & 67.0 & 68.0 & 0 & 3.7 & 16.7 \\
MBPP debate-refine & Qwen2.5-7B & Debate & LW2S after verif. & 67.0 & 68.0 & 0 & 3.7 & 16.7 \\
MBPP debate-refine & Qwen2.5-7B & Debate & Full workflow & 67.0 & 67.0 & 0 & 3.7 & 0.0 \\
\bottomrule
\end{tabular}
\vskip -0.1in
\end{table}

%% file: Tab/CalibrationCertificate.tex
\begin{table}[!t]
\centering
\caption{Representative LW2S calibration results. UCB$_{80}$ is the Wilson upper bound used by the acceptance rule (\(\alpha=0.2\)); saving is measured on the calibration traces.}
\vskip -0.1in
\label{tab:calibration-certificate}
\scriptsize
\setlength{\tabcolsep}{3pt}
\begin{tabular}{lrrrrr}
\toprule
Setting & cal. $n$ & $\tau$ & Deg. & UCB$_{80}$ & Cal. saving \\
\midrule
MATH summarizer & 140 & 0.95 & 0 & 0.5 & 10.5 \\
GSM8K verifier & 50 & 0.00 & 0 & 1.4 & 26.2 \\
MBPP verifier & 150 & 1.00 & 0 & 0.5 & 24.5 \\
MBPP summarizer & 150 & 0.00 & 0 & 0.5 & 16.6 \\
\bottomrule
\end{tabular}
\vskip -0.1in
\end{table}

%% file: Tab/Uncertainty.tex
\begin{table}[!t]
\centering
\caption{Coverage and uncertainty for the Qwen2.5-7B LW2S results in Table~\ref{tab:main-evidence}. Selected counts tasks assigned a skip; UCB$_{95}$ bounds degradation over all evaluated tasks.}
\vskip -0.1in
\label{tab:uncertainty}
\scriptsize
\setlength{\tabcolsep}{3pt}
\begin{tabular}{lrrrrr}
\toprule
Setting & $n$ & Selected & Deg. & UCB$_{95}$ & Save \\
\midrule
MATH verifier & 250 & 250 & 0 & 1.5 & 25.1 \\
MATH summarizer & 250 & 250 & 0 & 1.5 & 18.4 \\
GSM8K verifier & 200 & 173 & 0 & 1.9 & 20.6 \\
MMLU verifier & 100 & 100 & 1 & 5.4 & 8.6 \\
MBPP slice0 & 100 & 100 & 0 & 3.7 & 28.5 \\
MBPP slice1 & 100 & 100 & 0 & 3.7 & 31.2 \\
\bottomrule
\end{tabular}
\vskip -0.1in
\end{table}

%% file: Tab/ComponentDiscriminative.tex
\begin{table}[!t]
\centering
\caption{MBPP component-credit diagnostic. \(n_v\) and \(n_s\) count verifier- and summarizer-skip selections; -- denotes unreported counts. Baseline accuracy, degradation, and saving match Table~\ref{tab:main-evidence}.}
\vskip -0.1in
\label{tab:component-discriminative}
\scriptsize
\renewcommand{\arraystretch}{1.04}
\setlength{\tabcolsep}{3.6pt}
\begin{tabular}{@{}L{0.13\textwidth}L{0.30\textwidth}rrrrrr@{}}
\toprule
Slice & Method & Full & Acc. & Deg. & $n_v$ & $n_s$ & Save \\
\midrule
MBPP slice0 & RouteLLM & 69.0 & 68.0 & 1 & -- & -- & 16.9 \\
 & GraphPlanner & 69.0 & 69.0 & 0 & -- & -- & 13.4 \\
 & AutoMix & 69.0 & 69.0 & 0 & -- & -- & 22.1 \\
 & Fixed skip verifier & 69.0 & 70.0 & 0 & 100 & 0 & 24.2 \\
 & Fixed skip summarizer & 69.0 & 69.0 & 0 & 0 & 100 & 16.3 \\
 & \textbf{LW2S (ours)} & 69.0 & 70.0 & 0 & 69 & 31 & \textbf{28.5} \\
\addlinespace[1pt]
MBPP slice1 & RouteLLM & 63.0 & 62.0 & 1 & -- & -- & 18.5 \\
 & GraphPlanner & 63.0 & 63.0 & 0 & -- & -- & 15.2 \\
 & AutoMix & 63.0 & 62.0 & 1 & -- & -- & 24.9 \\
 & Fixed skip verifier & 63.0 & 61.0 & 2 & 100 & 0 & 25.1 \\
 & Fixed skip summarizer & 63.0 & 63.0 & 0 & 0 & 100 & 17.0 \\
 & \textbf{LW2S (ours)} & 63.0 & 63.0 & 0 & 57 & 43 & \textbf{31.2} \\
\bottomrule
\end{tabular}
\vskip -0.1in
\end{table}

%% file: Tab/FailureTaxonomy.tex
\begin{table}[!t]
\centering
\caption{Failure and abstention taxonomy. The same visible prefix can indicate safe redundancy in one domain and an unsafe shared error in another.}
\vskip -0.1in
\label{tab:failure-taxonomy}
\scriptsize
\setlength{\tabcolsep}{2.8pt}
\renewcommand{\arraystretch}{1.12}
\begin{tabular}{@{}L{0.15\textwidth}L{0.19\textwidth}L{0.22\textwidth}L{0.17\textwidth}L{0.18\textwidth}@{}}
\toprule
Regime & Observable state & Mechanism & Evidence & LW2S response \\
\midrule
Shared wrong consensus & Executors agree on the same answer, but the verifier changes it to the gold answer & Agreement is not a certificate when agents share a misconception or shortcut & MMLU verifier boundary; Table~\ref{tab:mmlu-failure-cases}; Qwen2.5-14B MMLU rows & Accept calibrated actions; retain the verifier when acceptance fails \\
Superficial code agreement & Programs look similar, but visible tests expose a shared bug & Non-executable agreement can preserve the same incorrect logic across executors & MBPP slice1 AST-guard degradation in Table~\ref{tab:mbpp-guard-ablation} & Use executable guards and fall through to later decisions when early removal is blocked \\
Redundant late summarization & A verifier or checked intermediate already contains a parsable final answer or executable code & The final summarizer can add cost without improving reward, and can overwrite a correct intermediate answer & MATH summarizer, MBPP two-stage, and debate-refine rows & Prefer late-component omission when calibrated evidence supports reward preservation \\
\bottomrule
\end{tabular}
\vskip -0.1in
\end{table}

%% file: Tab/MBPPGuardAblation.tex
\begin{table}[!t]
\centering
\caption{MBPP guard and structural diagnostics on unseen 100-task slices. Verifier/summarizer AST equality is a hindsight diagnostic; all other conditions use outputs available at their decision stage.}
\vskip -0.1in
\label{tab:mbpp-guard-ablation}
\scriptsize
\setlength{\tabcolsep}{4.2pt}
\renewcommand{\arraystretch}{1.04}
\begin{tabular}{@{}L{0.10\textwidth}L{0.31\textwidth}rrrr@{}}
\toprule
Slice & Condition & Full & Acc. & Deg. & Selected \\
\midrule
slice0 & both executors pass & 69.0 & 70.0 & 0 & 69 \\
 & Executor AST equality & 69.0 & 70.0 & 0 & 93 \\
 & both pass + AST & 69.0 & 70.0 & 0 & 65 \\
 & verifier passes tests & 69.0 & 69.0 & 0 & 69 \\
 & verifier/summarizer AST (hindsight) & 69.0 & 69.0 & 0 & 99 \\
\addlinespace[1pt]
slice1 & both executors pass & 63.0 & 63.0 & 0 & 58 \\
 & Executor AST equality & 63.0 & 61.0 & 2 & 94 \\
 & both pass + AST & 63.0 & 63.0 & 0 & 58 \\
 & verifier passes tests & 63.0 & 63.0 & 0 & 63 \\
 & verifier/summarizer AST (hindsight) & 63.0 & 63.0 & 0 & 100 \\
\bottomrule
\end{tabular}
\vskip -0.1in
\end{table}

%% file: Tab/MMLUFailureCases.tex
\begin{table}[!t]
\centering
\caption{MMLU verifier-skip degradation case. Skipping retains the executors' shared wrong option A, while verification corrects it to the gold option C.}
\vskip -0.1in
\label{tab:mmlu-failure-cases}
\scriptsize
\setlength{\tabcolsep}{3pt}
\renewcommand{\arraystretch}{1.08}
\begin{tabular}{@{}L{0.17\textwidth}L{0.16\textwidth}L{0.28\textwidth}L{0.18\textwidth}ccc@{}}
\toprule
Example & Subject & Short prompt cue & Failure signal & Skip & Verifier & Gold \\
\midrule
mmlu\_test\_13860 & virology & What is a virus pandemic? & executor consensus A/A & A & C & C \\
\bottomrule
\end{tabular}
\vskip -0.1in
\end{table}